\theoremstyle{nonumberplain}
\newtheorem{proof}{Proof}
\newtheorem{thm}{Theorem}
\newtheorem{lemma}{Lemma}
\newtheorem{remark}{Remark}
\begin{document}

\title{Geometric Interpretation of side-sharing and point-sharing solutions in the P3P Problem}

\author{Bo Wang\\
Institute of Automation, Chinese Academy of Sciences \\
{\tt\small bo.wang@ia.ac.cn}
\and
Hao Hu \\
Ming Hsieh Department of Electrical Engineering, University of Southern California\\
{\tt\small huhao@usc.edu}
\and
Caixia Zhang \\
College of Science, North China University of Technology\\
{\tt\small zhangcx@ncut.edu.cn}
}

\maketitle

\begin{abstract}
It is well known that the P3P problem could have 1,  2,  3 and at most 4 positive solutions under different configurations among its 3 control points and the position of the optical center. Since in any real applications,  the knowledge on the exact number of possible solutions is a prerequisite for selecting the right one among all the possible solutions,  the study on the phenomenon of multiple solutions in the P3P problem has been an active topic . In this work,  we provide some new geometric interpretations on the multi-solution phenomenon in the P3P problem,  our main results include: (1): The necessary and sufficient condition for the P3P problem to have a pair of side-sharing solutions is the two optical centers of the solutions both lie on one of the 3 vertical planes to the base plane of control points; (2): The necessary and sufficient condition for the P3P problem to have a pair of point-sharing solutions is the two optical centers of the solutions both lie on one of the 3 so-called skewed danger cylinders;(3): If the P3P problem has other solutions in addition to a pair of side-sharing ( point-sharing) solutions,  these remaining solutions must be a point-sharing ( side-sharing ) pair. In a sense,  the side-sharing pair and the point-sharing pair are companion pairs. In sum,  our results provide some new insights into the nature of the multi-solution phenomenon in the P3P problem,  in addition to their academic value,  they could also be used as some theoretical guidance for practitioners in real applications to avoid occurrence of multiple solutions by properly arranging the control points.

\end{abstract}

\section{Introduction}

The Perspective-3-Point Problem,  or P3P problem,  was first introduced by Grunert \cite{Grunert1841} in 1841,  and popularized in computer vision community a century later by mainly the Fishler and Bolles' work in 1981 \cite{RANSAC1981}. Since it is the least number of points to have a finite number of solutions,  it has been widely used in various fields (\cite{high-precision2006},\cite{Lowe1991},\cite{Nister2007},\cite{Quan1999},\cite{Tang2009}),  either for its minimal demand in restricted working environment,  such as robotics and aeronautics,  or for its computational efficiency of robust pose estimation under the random-sampling paradigm,  for example,  RANSAC-like algorithms.

The P3P problem can be defined as: Given the perspective projections of three control points with known coordinates in the world system and a calibrated camera,  determine the position and orientation of the camera in the world system. It is shown that \cite{Haralick1994} the P3P problem could have 1, 2, 3 or at most 4 solutions depending on the configuration between the optical center and its 3 control points. If the optical center and 3 control points happen to be cocyclic,  the problem becomes degenerated,  and an infinitely large number of solutions could be possible.

Since in any real applications,  a unique solution is always desirable,  and if the multiple solutions are unavoidable,  the knowledge on the exact number of possible solutions is a prerequisite for selecting the right solution from all the possible ones,  the study on the phenomenon of multiple solutions in the P3P problem has been an active topic since its very inception. In the literature,  the research on the multi-solution phenomenon in the P3P problem can be broadly divided into two directions. The one is so-called the algebraic approach,  the other is the geometric approach.

For the algebraic approach,  it basically boils down in the literature to the analysis of the roots distribution of the resulting quartic equation from the 3 original constraints. The related works include (\cite{Faugere2008},\cite{Gaoxiaoshan2003},\cite{Gaoxiaoshan2006},\cite{Rieck2012JMIV},\cite{Rieck2012VISAPP},\cite{Rieck2014},\cite{Wolfe1991},\cite{wuyihong2006}): For example,  Gao \cite{Gaoxiaoshan2003} provided the complete solution set and the probability \cite{Gaoxiaoshan2006} of having 1,  2,  3 and 4 solutions of the P3P problem; Rieck systematically analyzed solutions around the danger cylinder \cite{Rieck2012JMIV},\cite{Rieck2014},  and proposed a new algorithm for camera pose estimation close to the danger cylinder.

\indent For the geometric approach,  Zhang and Hu\cite{cylinder2006} showed that if the optical center lies on the danger cylinder,  the corresponding P3P problem must have 3 different solutions,  and that if the optical center lies on one of the three vertical planes to the base plane of control points,  the corresponding P3P problem must have 4 different solutions \cite{Wolfe2008},\cite{four2005}. Sun \cite{sunfengmei2010} found that when the optical center lies on the two intersecting vertical lines between the danger cylinder and one of the three vertical planes,  either the side-sharing pair or the point-sharing pair of solutions degenerates into a double solution. However such geometric conclusions do not indicate whether besides the danger cylinder or the 3 vertical planes,  the P3P problem could still have 3 or 4 different solutions if the optical center lies on other places.\\
\indent
In this work,  we push the frontier a step further for the geometrical approach,  we not only find the geometrically meaningful sufficient conditions but also necessary conditions for the multiple solutions of the P3P problem. To our knowledge,  we are the first to obtain both the necessary and sufficient conditions with clear geometrical interpretations in the literature. \\
\indent
This work is organized as follows: The next section introduces some preliminaries on the P3P problem. The main results are listed in Section 3: In subsection 3.1, three theorems are related to algebraic and geometric conditions of a pair of side-sharing solutions;In subsection 3.2, three theorems are related to algebraic and geometric conditions of a pair of point-sharing solutions;
In subsection 3.3, we propose the relation between the point-sharing and side-sharing solutions, that is: the side-sharing pair and the point-sharing pair are often companion pairs,  and followed by some conclusions in Section 4.

\section{Preliminaries}
For the notational convenience and a better understanding of our main results in the next section,  here we first list some preliminaries on the P3P problem.
\subsection{The three basic constraints of the P3P problem and its transformed two conics}

\begin{figure}[t]
\begin{center}
\includegraphics[width=0.5\linewidth]{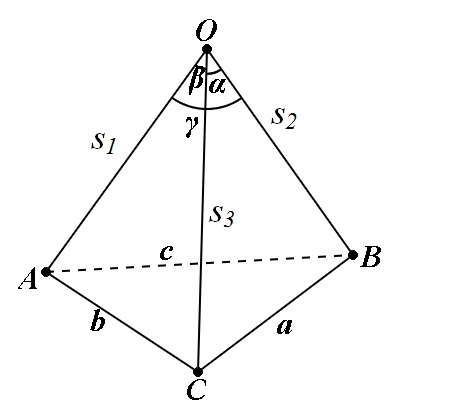}
\end{center}
   \caption{P3P problem definition: $A, B, C$ are the 3 control points,  $O$ is the optical center.}
\label{Fig1}
\end{figure}

As shown in Fig.\ref{Fig1},  $A, B, C$ are the three control points with known distance $a=|BC|, b=|AC|, c=|AB|$,  $O$ is the optical center.  Since the camera (under the pinhole model) is calibrated,  the three subtended angles $\alpha, \beta, \gamma$ of the projection rays can be computed,  hence can be considered as known entities,  then by the Law of Cosines,  we immediately have the following 3 basic constraints on the three unknown $s_1=|OA|, s_2=|OB|, s_3=|OC|$ in (\ref{equ1}).
\begin{gather}\label{equ1}
  \begin{cases}
    s_1^2 +s_2^2 -2cos\gamma s_1s_2 =c^2 \\
    s_1^2 +s_3^2 -2cos \beta s_1s_3 =b^2  \\
    s_2^2 +s_3^2 -2cos\alpha s_2s_3 =a^2
  \end{cases}
\end{gather}
\indent Hence the P3P problem refers to determine such positive triplets $(s_1, s_2, s_3)$ satisfying the 3 basic constraints in (\ref{equ1}). And the so-called multiple-solution phenomenon in the P3P problem means for a given P3P problem,  more than one positive triplets exist which all satisfy the 3 basic constraints in (\ref{equ1}). Clearly,  given a positive triplet (or a solution),  its optical center location with respect to the 3 control points can be uniquely determined.

Let $u=s_2/s_1$,  and $v=s_3/s_1$,  then the three constraints in (\ref{equ1}) can be transformed into the following two conics in (\ref{equ2}) with $u$ and $v$ as variables:
{\small
\begin{gather}\label{equ2}
  \begin{cases}
    C_1:(a^2-b^2)v^2+2b^2cos\alpha uv-b^2u^2-2a^2cos\beta v+a^2=0\\
    C_2:(a^2-c^2)u^2+2c^2cos\alpha uv-c^2v^2-2a^2cos\gamma u+a^2=0
  \end{cases}
\end{gather}
}
Clearly the above two conics can intersect at most 4 points with positive coordinates,  and since each positive intersecting point can give rise to a positive solution of the P3P problem,  the P3P problem can have at most 4 different positive solutions. For the presentational purpose,  hereinafter,  a solution of the P3P problem means a positive solution. As shown later in the text,  these two conics are the characteristic conics of the P3P problem,  and the configurations of their intersecting points directly determine the multi-solution phenomenon in the P3P problem.

\subsection{The danger cylinder and the 3 vertical planes}

\begin{figure}[t]
\begin{center}
\includegraphics[width=0.4\linewidth]{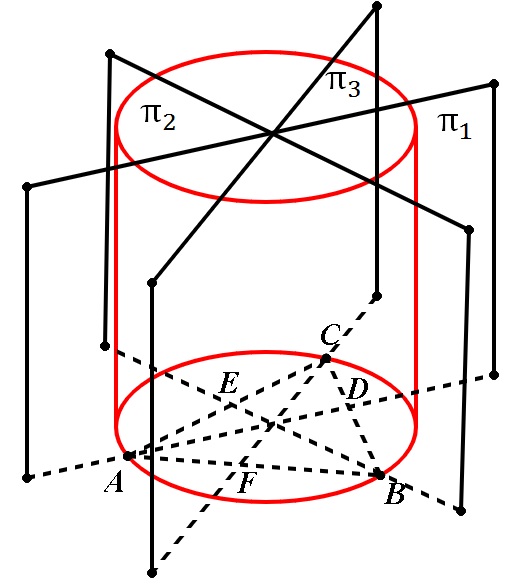}
\end{center}
   \caption{The danger cylinder and three vertical planes.}
\label{Fig3first}
\end{figure}

As shown in Fig.\ref{Fig3first},  the vertical cylinder passing through the 3 control points $A, B, C$ is called the danger cylinder. It is shown that if the optical center lies on the danger cylinder,  the solution of the P3P problem is unstable \cite{cylinder1966}. Zhang \cite{cylinder2006} showed that if the optical center lies on the danger cylinder,  there are always three different solutions of the P3P problem. Later on,  Sun \cite{sunfengmei2010} showed that the necessary and sufficient condition for the optical center lies on the danger cylinder is the nullness of the determinant of the Jacobian of the 3 constraints in (\ref{equ1}). In \cite{Rieck2014} it is shown that the optical center lying on the danger cylinder is the necessary and sufficient condition for the P3P problem to have repeated solutions.

As shown in Fig.\ref{Fig3first},  $AD,  BE,  CF$ are the three altitudes of the triangle $ABC$,  the three vertical planes to the base plane of the triangle $ABC$ are the vertical planes going through $AD$,  or $BE$,  or $CF$,  denoted as $\pi_1$ ,  $\pi_2$ and $\pi_3$. In Section 3,  we show that when the optical center lies on one of these 3 vertical planes,  the corresponding P3P problem must have a pair of side-sharing solutions,  and vice versa.
\subsection{Side-sharing solutions and point-sharing solutions}

\begin{figure}[t]
\begin{center}
\includegraphics[width=0.5\linewidth]{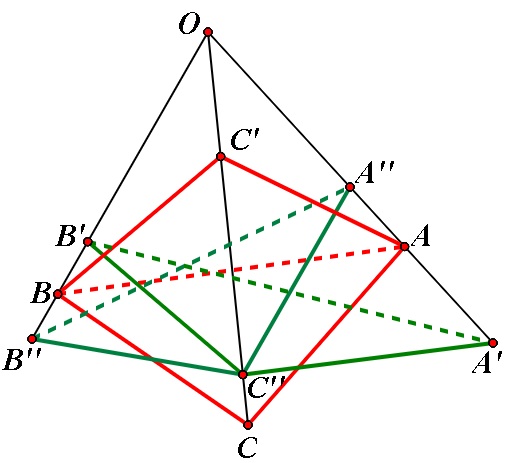}
\end{center}
   \caption{Side-sharing solutions and point-sharing solutions: $\{O, (A, B, C)\}$ and $\{O, (A, B, C')\}$ are a side-sharing pair with $AB$ being the shared side,  $\{O, (A', B', C'')\}$ and $\{O, (A'', B'', C'')\}$ are a point-sharing pair with $C''$ being the shared point.}
\label{Fig3first2}
\end{figure}

Geometrically,  a solution of P3P problem can be completely defined by the 3 control points $A, B, C$ and the optical center position $O$,  denoted as $\{O, (A, B, C) \}$. Given a P3P problem,  its two different solutions can be expressed either by superposing the 3 control points but with different optical center positions,  such as $\{O, (A, B, C) \}, \{O', (A, B, C)\}$,  or equivalently by superposing the optical centers $O$ and $O'$,  but with different positions of the control points at their respective projection rays,  denoted as $\{O, (A, B, C) \}, \{O, (A', B', C' ) \}$. For convenience, we call them as three control points unchanged expression and the optical center unchanged expression. In the following theorems, these two different expressions are interchangeably used for the easy representation purpose depending on the circumstance.

In this work,  a pair of side-sharing solutions and a pair of point-sharing solutions are frequently used. As shown in \cite{four2005},  a pair of side-sharing solutions refers to two such solutions that if their optical centers are superposed,  they must share two control points or share a common side of the control point triangle. In contrast,  a pair of point-sharing solutions refers to two such solutions that if their optical centers are superposed,  they must share a single common control point. As shown in Fig.\ref{Fig3first2},  $\{O, (A, B, C)\}$ and $\{O, (A, B, C')\}$ are a pair of side-sharing solutions with shared side $AB$,  and $\{O, (A', B', C'')\}$ and $\{O, (A'', B'', C'')\}$ are a pair of point-sharing solutions with shared point $C''$.
\section{Main Results}
In this section, we show that the necessary and sufficient condition for the P3P problem to have a pair of side-sharing solutions is its optical center lies on one of the three vertical planes, and the necessary and sufficient condition for the P3P problem to have a pair of point-sharing solutions is its optical center lies on one of the three so-called skewed danger cylinders. In addition, we show that given a side-sharing pair (or a point-sharing pair) of solutions, if the P3P problem still has other solutions, these remaining solutions must be a pair of point-sharing (or side-sharing) solutions.

Before discussing our main results, we would at first needs the following statements:
\begin{thm}\label{thmu0v0}
If the two conics have an intersection  $(u_0, v_0)$ in the quadrant I, then $(u_0, v_0)$ is corresponding to one solution of the P3P problem.
\end{thm}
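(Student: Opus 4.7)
The plan is simply to invert the change of variables that produced the two conics from the original system (\ref{equ1}), using the hypothesis that $(u_0,v_0)$ lies on both $C_1$ and $C_2$ to recover a positive triple $(s_1,s_2,s_3)$ that satisfies all three constraints. Recall that if each equation of (\ref{equ1}) is divided through by $s_1^2$ and the substitutions $u=s_2/s_1$, $v=s_3/s_1$ are applied, the three equations take the forms $s_1^2(1+u^2-2\cos\gamma\,u)=c^2$, $s_1^2(1+v^2-2\cos\beta\,v)=b^2$, and $s_1^2(u^2+v^2-2\cos\alpha\,uv)=a^2$; eliminating $s_1^2$ between the last two (respectively, between the first and the last) of these reproduces exactly $C_1$ (respectively, $C_2$) in (\ref{equ2}).

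Given the intersection point $(u_0,v_0)$ with $u_0,v_0>0$, I would \emph{define}
\[
s_1^2 \;:=\; \frac{a^2}{u_0^2+v_0^2-2u_0v_0\cos\alpha},
\]
take the positive square root, and then set $s_2:=u_0 s_1$ and $s_3:=v_0 s_1$; the positivity of $u_0,v_0$ makes $s_2,s_3$ positive. For this definition to be legitimate, the denominator must be strictly positive, and this is the only point that requires any care. It follows from the identity
\[
u_0^2+v_0^2-2u_0v_0\cos\alpha \;=\; (u_0-v_0\cos\alpha)^2 + v_0^2\sin^2\alpha,
\]
which is positive because $\alpha\in(0,\pi)$ forces $\sin\alpha>0$ and $v_0>0$. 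The same sum-of-squares rewriting, applied with $\beta$ and $\gamma$, shows that $1+v_0^2-2v_0\cos\beta$ and $1+u_0^2-2u_0\cos\gamma$ are also strictly positive, a fact I will use implicitly below.

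It remains to check the three equations of (\ref{equ1}). The third holds by the very choice of $s_1^2$, since it reduces to $s_1^2(u_0^2+v_0^2-2\cos\alpha\,u_0v_0)=a^2$. For the second equation I would write $s_1^2+s_3^2-2\cos\beta\, s_1 s_3 = s_1^2(1+v_0^2-2\cos\beta\, v_0)$ and then invoke the equation of $C_1$, which states exactly $a^2(1+v_0^2-2\cos\beta\, v_0)=b^2(u_0^2+v_0^2-2\cos\alpha\, u_0v_0)$; combining these, the expression collapses to $b^2$. The first equation follows in the same way from the equation of $C_2$. This yields a bona fide positive solution of the P3P problem. I do not anticipate any obstacle here beyond the positivity check just discussed; once that is secured, the verification of the three original constraints is a two-line substitution.
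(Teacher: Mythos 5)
Your proposal is correct and follows essentially the same route as the paper's own proof: both define $s_1=\sqrt{a^2/(u_0^2+v_0^2-2u_0v_0\cos\alpha)}$, set $s_2=u_0s_1$, $s_3=v_0s_1$, and use the two conic equations (rewritten as equalities of ratios) to verify the three constraints. Your explicit sum-of-squares justification of the positivity of the denominator is a small but welcome addition, since the paper merely asserts that inequality.
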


\begin{proof}
Since $(u_0, v_0)$ lies in the quadrant I, we have $u_0>0, v_0>0 $ and also have：
{\small
\begin{gather}
    (a^2-b^2)v_0^2+2b^2cos\alpha u_0v_0-b^2u_0^2-2a^2cos\beta v_0+a^2=0   \label{conic1} \\
    (a^2-c^2)u_0^2+2c^2cos\alpha u_0v_0-c^2v_0^2-2a^2cos\gamma u_0+a^2=0  \label{conic2}
\end{gather}
}
It is obvious that equation (\ref{conic1}) and equation (\ref{conic2}) are equivalent to the following equations (\ref{conic1e})  and (\ref{conic2e})  respectively.

\begin{numcases}{}
    \frac{a^2}{u_0^2+v_0^2-2cos\alpha u_0v_0}=\frac {b^2}{1+v_0^2-2cos\beta v_0}    \label{conic1e} \\
    \frac{a^2}{u_0^2+v_0^2-2cos\alpha u_0v_0}=\frac {c^2}{1+u_0^2-2cos\gamma u_0}   \label{conic2e}
\end{numcases}

Considering the inequality $u_0^2+v_0^2-2cos\alpha u_0v_0>0 $, let
\begin{gather}
    s_{01}=\sqrt{\frac{a^2}{u_0^2+v_0^2-2cos\alpha u_0v_0}}    \label{s01}
\end{gather}
where in the above double subscript, the first "0" and the second "1" indicate a specific solution and the component of the solution respectively, it is similar in the following notations. 
Let
\begin{gather}
    s_{02}=u_0s_{01}    \label{s02}\\ 
    s_{03}=v_0s_{01}    \label{s03}
\end{gather}
It is obvious that $s_{01}>0,s_{02}>0,s_{03}>0$ and $(s_{01},s_{02},s_{03})$ satisfy the 3 basic constraints in (\ref{equ1}). So $(s_{01},s_{02},s_{03})$ is one solution of the P3P problem.
\end{proof}

\begin{lemma} \label{lemmaRieck}
The necessary and sufficient condition for the P3P problem to have the repeated solutions is that the optical center lies in the danger cylinder.
\end{lemma}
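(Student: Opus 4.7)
The plan is to recast ``the P3P problem has a repeated solution'' as the vanishing of a Jacobian determinant at a valid solution point, and then invoke the characterization already cited in the preliminaries (attributed to Sun \cite{sunfengmei2010}) stating that the Jacobian of the three basic constraints in (\ref{equ1}) has zero determinant if and only if the optical center lies on the danger cylinder.

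First I would make the algebraic setup explicit. Let $F=(F_1,F_2,F_3):\mathbb{R}_{>0}^3\to\mathbb{R}^3$ be the map whose $i$-th component is the difference of the two sides of the $i$-th equation in (\ref{equ1}). A solution of the P3P problem is a point of $F^{-1}(0)$ in the positive octant. By the Inverse Function Theorem applied at a simple isolated zero, the Jacobian $J_F$ is non-singular there; conversely, when two positive solutions coalesce into a repeated root, continuity of $F$ together with the implicit-function argument forces $\det J_F$ to vanish at the merged point. Hence ``the P3P problem has a repeated solution'' is equivalent to ``there exists a point in the positive octant at which $F$ vanishes and $\det J_F=0$.''

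Next I would cross-check this three-variable viewpoint against the two-variable conic picture established in Theorem \ref{thmu0v0}. Under the substitution $u=s_2/s_1$, $v=s_3/s_1$, each P3P solution corresponds to a positive intersection point of $C_1$ and $C_2$, and two solutions coalesce precisely when $C_1$ and $C_2$ meet tangentially at that point, equivalently when $\nabla C_1\parallel\nabla C_2$ there, i.e.\ $\det\partial(C_1,C_2)/\partial(u,v)=0$. Since $C_1,C_2$ are obtained from the original constraints by dividing out $s_1^2$, a direct computation shows that $\det J_F$ and $\det\partial(C_1,C_2)/\partial(u,v)$ differ only by a non-zero multiplicative factor (a positive power of $s_1$), so either vanishes exactly when the other does. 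This ensures the ``repeated-root'' notion used in the lemma is intrinsic to the geometry and not an artifact of the chosen formulation.

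Finally I would invoke the characterization stated in Section 2.2: the vanishing of $\det J_F$ at a point of the positive octant is the necessary and sufficient condition for the associated optical center to lie on the danger cylinder. Concatenating with the reformulation in the first step closes both directions of the lemma. The main obstacle I anticipate is not the invocation itself but the bookkeeping needed to ensure the coalescing roots remain strictly positive (so that they represent physical cameras) and that the Jacobian identity relating the $(u,v)$-formulation to the $(s_1,s_2,s_3)$-formulation holds away from the coordinate axes; everything else reduces to standard facts about implicit systems of quadratics and to the already-established danger-cylinder criterion.
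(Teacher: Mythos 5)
The paper does not actually prove this lemma at all --- it simply defers to \cite{Rieck2012JMIV} --- so your proposal is a genuinely different route: you assemble a proof internal to the paper's own framework by chaining ``repeated solution $\Leftrightarrow$ singular Jacobian of the system (\ref{equ1}) at a positive solution'' with Sun's criterion ``singular Jacobian $\Leftrightarrow$ optical center on the danger cylinder'' quoted in Section 2.2. That is an attractive plan because it reuses machinery the paper already cites and connects naturally to the tangency characterization in the paper's second lemma.

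However, two steps are currently gaps rather than proofs. First, your justification of ``repeated solution $\Leftrightarrow \det J_F=0$'' misuses the Inverse Function Theorem: IFT runs in the direction ``nonsingular Jacobian $\Rightarrow$ simple isolated zero,'' and you cannot apply it \emph{at} a simple isolated zero to conclude nonsingularity (an isolated zero can have singular Jacobian, e.g.\ $x\mapsto x^3$ at $0$); likewise ``coalescing roots force $\det J_F=0$ by continuity'' presupposes that the repeated root arises as a limit of nearby configurations with two distinct roots, which is not part of the hypothesis. What you actually need is to fix the definition of ``repeated solution'' as a root of multiplicity at least two of the polynomial system (equivalently, a tangency of the conics in (\ref{equ2}), which is how the paper's second lemma uses the term) and invoke the standard fact that for a zero-dimensional polynomial system a root is simple iff the Jacobian is nonsingular there. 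Second, the claim that $\det J_F$ (a $3\times 3$ determinant in $(s_1,s_2,s_3)$) and $\det\partial(C_1,C_2)/\partial(u,v)$ (a $2\times 2$ determinant) ``differ only by a positive power of $s_1$'' cannot be literally correct as stated: the two systems live in different numbers of variables, $C_1$ and $C_2$ are the particular combinations $(a^2F_2-b^2F_3)/s_1^2$ and $(a^2F_1-c^2F_3)/s_1^2$ composed with the non-invertible substitution $(s_1,s_2,s_3)\mapsto(s_2/s_1,s_3/s_1)$, and relating the two degeneracy conditions on the solution set requires an explicit computation (essentially the one carried out in \cite{sunfengmei2010}). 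With those two steps supplied, your argument would give a self-contained alternative to the paper's bare citation; as written it is a plausible outline with the key equivalences asserted rather than established.
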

The proof of this lemma refers to \cite{Rieck2012JMIV}.

\begin{lemma}
The necessary and sufficient condition for the P3P problem to have the repeated solutions is that the two conics derived from the P3P problem are tangential in the quadrant I.
\end{lemma}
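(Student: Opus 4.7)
The plan is to use Theorem \ref{thmu0v0} to translate between positive solutions of the P3P problem and quadrant-I intersection points of the two conics $C_1$ and $C_2$, and then to argue that coincident (repeated) solutions correspond precisely to intersection points of multiplicity at least two, which is the analytic definition of tangency for two smooth conics.

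First I would fill in the converse of Theorem \ref{thmu0v0}: given any positive solution $(s_1, s_2, s_3)$ of (\ref{equ1}), the pair $(u, v) = (s_2/s_1, s_3/s_1)$ lies in quadrant I and, by the very derivation of (\ref{equ2}) from (\ref{equ1}), satisfies both conic equations. Together with Theorem \ref{thmu0v0} this yields a bijection between positive P3P solutions and quadrant-I intersection points of $C_1$ and $C_2$. For the necessity direction, if the P3P problem has repeated solutions then two of these intersection points coincide at some $(u_0, v_0)$ in quadrant I; since two conics meet transversally at a simple intersection, the coincidence of two intersection points forces the local intersection multiplicity at $(u_0, v_0)$ to be at least two, which is equivalent to $\nabla C_1(u_0, v_0) \parallel \nabla C_2(u_0, v_0)$, i.e.\ tangency in quadrant I.

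For sufficiency, I would assume $C_1$ and $C_2$ are tangent at some $(u_0, v_0)$ in quadrant I. By Theorem \ref{thmu0v0} this point already furnishes a positive solution of the P3P problem, and the tangency forces intersection multiplicity at least two at $(u_0, v_0)$. A small generic perturbation of the parameters $(\alpha, \beta, \gamma, a, b, c)$ then splits the double intersection into two nearby simple quadrant-I intersections (continuity keeps them in the open quadrant since $u_0, v_0 > 0$), yielding two distinct solutions of the perturbed problem that coalesce to the original as the perturbation tends to zero, exhibiting it as a repeated solution.

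The main obstacle is making the multiplicity bookkeeping rigorous: showing that the coincidence of two P3P solutions corresponds to intersection multiplicity, and not merely set-theoretic coincidence, of the conics at a quadrant-I point. The cleanest route is to eliminate one variable from $(C_1, C_2)$ via a resultant, producing a quartic in the remaining variable whose positive roots counted with multiplicity agree with those of the standard P3P quartic; a double positive root of this eliminated polynomial is then by construction equivalent to a multiplicity-$\geq 2$ intersection of the conics in quadrant I, which is precisely tangency. With this dictionary in hand, the equivalence in the lemma becomes essentially a restatement of Lemma \ref{lemmaRieck} in conic-geometric language.
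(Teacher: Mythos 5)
Your proposal follows essentially the same route as the paper: both rest on the correspondence (Theorem \ref{thmu0v0} together with the explicit formulas for $s_{01},s_{02},s_{03}$) between quadrant-I intersection points of $C_1,C_2$ and positive P3P solutions, and both identify a repeated solution with a coincident, multiplicity-$\geq 2$ intersection, i.e.\ a tangency. The only caveat is that your perturbation step in the sufficiency direction is unnecessary and slightly delicate (a real perturbation can split a tangency into a complex-conjugate pair rather than two nearby real intersections); the multiplicity dictionary you set up at the end already suffices, and is all the paper itself implicitly uses.
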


\begin{proof}
 {\bf (Sufficiency)}
  If the two conics derived from the P3P problem are tangential in the quadrant I, then they have a couple of repeated positive solutions which are $(u_t, v_t)$ and $(u_t^{'}, v_t^{'})$. By Theorem \ref{thmu0v0}, The P3P problem has two solutions $(s_{t1},s_{t2},s_{t3})$  and  $(s_{t1}^{'},s_{t2}^{'},s_{t3}^{'})$ corresponding to $(u_t, v_t)$ and $(u_t^{'}, v_t^{'})$ respectively. By Equations (\ref{s10})--(\ref{s30}), $(s_{t1},s_{t2},s_{t3})= (s_{t1}^{'},s_{t2}^{'},s_{t3}^{'}) $,  so the P3P problem has the repeated solutions.\\
 {\bf (Necessity)}
 If the P3P problem has the repeated solutions which are $(s_{t1},s_{t2},s_{t3})$  and $(s_{t1}^{'},s_{t2}^{'},s_{t3}^{'})$  , then the corresponding $(u_t, v_t)$  and $(u_t^{'}, v_t^{'})$  are equal and locate in the quadrant I by Equations (\ref{s02}) and (\ref{s03}). This shows that the two conics are tangential in the quadrant I.
\end{proof}

\subsection{A side-sharing pair and the geometric meaning}
In this section, we show that three theorems about the side-sharing pair: Theorem \ref{thmsidetwo} is the necessary and sufficient condition,  Theorem \ref{thmsideone} is a sufficient condition,  Theorem \ref{thmsidegeometric} is the geometric meaning that the optical centers of side-sharing solutions must lie on one of three vertical planes.
\begin{figure}[t]
\begin{center}
\includegraphics[width=0.5\linewidth]{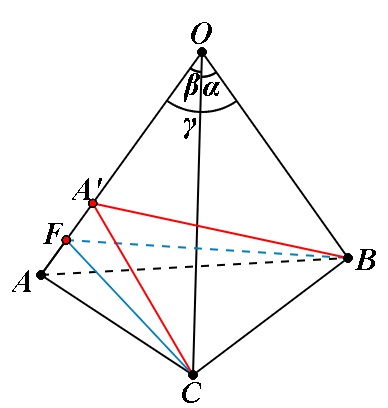}
\end{center}
   \caption{$\{O, (A, B, C)\}$ and $\{O, (A', B, C)\}$ are a side-sharing pair of solutions.}
\label{fig5first}
\end{figure}

\begin{thm}\label{thmsidetwo}
  Under the optical center unchanged expression, the necessary and sufficient condition for the P3P problem to have a side-sharing pair with a shared side $BC$ is both the corresponding $(u_1, v_1)$ and $(u_2, v_2)$  satisfy the equation $cos\gamma u-cos\beta v=0$.
\end{thm}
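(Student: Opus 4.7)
The plan hinges on the optical-center-unchanged interpretation: a side-sharing pair with shared side $BC$ is precisely a pair of P3P solutions in which $s_2=|OB|$ and $s_3=|OC|$ coincide across the two solutions while $s_1=|OA|$ does not. I would prove each direction by playing the first two constraints of (\ref{equ1}) off against each other with this observation in hand.

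For sufficiency, fix the common values of $s_2$ and $s_3$ and re-read the first two equations of (\ref{equ1}) as quadratics in the single variable $s_1$:
\[
s_1^{2}-2\cos\gamma\, s_2\, s_1+(s_2^{2}-c^{2})=0,\qquad s_1^{2}-2\cos\beta\, s_3\, s_1+(s_3^{2}-b^{2})=0.
\]
Both $s_{11}$ and $s_{21}$ are roots of each quadratic, and since the two solutions are distinct these are the two distinct roots. Vieta's formula therefore forces $s_{11}+s_{21}=2\cos\gamma\, s_2=2\cos\beta\, s_3$, i.e.\ $\cos\gamma\, s_{i2}-\cos\beta\, s_{i3}=0$; dividing by $s_{i1}>0$ yields $\cos\gamma\, u_i-\cos\beta\, v_i=0$ for $i=1,2$.

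For necessity, start from the hypothesis that both $(u_i,v_i)$ satisfy $\cos\gamma\, u-\cos\beta\, v=0$. Multiplying by $s_{i1}$ gives $\cos\gamma\, s_{i2}=\cos\beta\, s_{i3}$, hence $s_{i3}=(\cos\gamma/\cos\beta)s_{i2}$. Subtracting the first equation of (\ref{equ1}) from the second, the two cross terms $2\cos\gamma\, s_{i1}s_{i2}$ and $2\cos\beta\, s_{i1}s_{i3}$ cancel, leaving
\[
s_{i2}^{2}-s_{i3}^{2}=c^{2}-b^{2}.
\]
Combined with the linear relation $s_{i3}=(\cos\gamma/\cos\beta)s_{i2}$, this uniquely determines the positive values of $s_{i2}$ and $s_{i3}$ independently of $i$, so $s_{12}=s_{22}$ and $s_{13}=s_{23}$, which is exactly the shared-$BC$ condition under the optical-center-unchanged expression.

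The main obstacle is a conceptual rather than a computational one: recognizing that the geometric side-sharing condition is equivalent to the algebraic statement that $s_2$ and $s_3$ are common to the two solutions, which in turn reduces both halves of the theorem to a one-line Vieta argument on the quadratic in $s_1$. Some care is needed for degenerate configurations such as $\cos\beta=0$ or $\cos^{2}\beta=\cos^{2}\gamma$, but these are ruled out whenever the two intersections $(u_i,v_i)$ are genuine positive points in Quadrant I, so the argument above covers the non-degenerate case of interest.
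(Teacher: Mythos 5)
Your necessity direction (side\mbox{-}sharing $\Rightarrow$ equation, which you have labelled ``sufficiency''---the two labels are swapped relative to the statement) takes a genuinely different and perfectly sound route from the paper. The paper argues geometrically: the midpoint $F$ of $AA'$ is the common foot of the perpendiculars from $B$ and $C$ to the ray $OA$, whence $|OF|=s_2\cos\gamma=s_3\cos\beta$. Your Vieta argument on the two quadratics in $s_1$ is the algebraic shadow of the same fact ($|OF|=\tfrac{1}{2}(s_{11}+s_{21})$ is half the sum of the roots), and it is valid once one notes that a genuine pair has $s_{11}\neq s_{21}$, so $s_{11},s_{21}$ really are the two distinct roots of each quadratic.

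The converse direction, however, has a real gap. Differencing the first two constraints of (1) gives only $s_{i3}^2-s_{i2}^2=b^2-c^2$, and combined with $s_{i3}=(\cos\gamma/\cos\beta)s_{i2}$ this yields $s_{i2}^2(\cos^2\gamma-\cos^2\beta)=\cos^2\beta\,(b^2-c^2)$, which pins down $s_{i2}$ only when $\cos^2\gamma\neq\cos^2\beta$. Your claim that the case $\cos^2\beta=\cos^2\gamma$ is ``ruled out whenever the intersections are genuine positive points in Quadrant I'' is false: by your own identity that case forces $b=c$, i.e.\ an isosceles control triangle viewed symmetrically about the axis through $A$, which is a perfectly non\mbox{-}degenerate configuration admitting genuine Quadrant-I intersections; there your relation degenerates to $0=0$ and cannot show $s_{12}=s_{22}$. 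The paper avoids the case split by using the \emph{third} constraint instead: $\cos\gamma\,s_{i2}=\cos\beta\,s_{i3}$ gives $s_{12}:s_{13}=s_{22}:s_{23}$, hence $s_{12}=k\,s_{22}$, $s_{13}=k\,s_{23}$ for some $k>0$, and substituting into $s_{i2}^2+s_{i3}^2-2\cos\alpha\,s_{i2}s_{i3}=a^2$ forces $k^2=1$, so $k=1$. Replacing your last step with this argument repairs the proof; everything else stands.
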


\begin{proof}
     {\bf (Necessity)}\\
   Let $(O,ABC)$  and $(O,A'BC)$  be a side-sharing pair with a shared side $BC$ (see Figure \ref{fig5first}) .\\
Then $|A'B|=|AB|$,$|A'C|=|AC|$.   Let $F$  be the midpoint of the segment $AA'$, then $CF \perp OA, BF \perp OA $, Clearly, $BC \perp OA$ and
\begin{gather*}
  \begin{cases}
    |OF|=|OB|cos\gamma \\
    |OF|=|OC|cos\beta
  \end{cases}
\end{gather*}
Let $s_{11}=|OA|, s_{2}=|OB|,s_{3}=|OC|,s_{21}=|OA'|$ be the side lengths.
We get
 \begin{gather}\label{side}
    s_{2}cos\gamma=s_{3}cos\beta
\end{gather}
Divide each side of Equation (\ref{side}) by $s_{11}$ , $cos\gamma u_1-cos\beta v_1=0$ .\\
Divide each side of Equation (\ref{side}) by $s_{21}$ , $cos\gamma u_2-cos\beta v_2=0$ .

{\bf (Sufficiency)}
Since  both  $(u_1, v_1)$ and $(u_2, v_2)$  satisfy the equation $cos\gamma u-cos\beta v=0$, we have
\begin{gather*}
  \begin{cases}
    cos\gamma u_1-cos\beta v_1=0 \\
    cos\gamma u_2-cos\beta v_2=0
  \end{cases}
\end{gather*}
By definition, $u_i=\frac {s_{i2}}{s_{i1}},v_i=\frac {s_{i3}}{s_{i1}},i=1,2$.
We get
\begin{gather*}
  \begin{cases}
    cos\gamma s_{12}-cos\beta s_{13}=0 \\
    cos\gamma s_{22}-cos\beta s_{23}=0
  \end{cases}
\end{gather*}
So we have
\begin{gather*}
     \frac{s_{12}}{s_{22}}=\frac{s_{13}}{s_{23}}=k>0
\end{gather*}
Since
\begin{gather}\label{a}
 s_{12}^2+s_{13}^2-2cos\alpha s_{12}s_{13}=a^2
\end{gather}
Substitute $s_{12}=ks_{22}, s_{13}=ks_{23}$  into Equation (\ref{a}), we have
\begin{gather*}
 k^2(s_{22}^2+s_{23}^2-2cos\alpha s_{22}s_{23})=a^2
\end{gather*}
Since
\begin{gather}
s_{22}^2+s_{23}^2-2cos\alpha s_{22}s_{23}=a^2
\end{gather}
We get $k=1$.
So $s_{12}=s_{22},s_{13}=s_{23}$, that is, two solutions of the P3P problem corresponding to $(u_1, v_1)$ and $(u_2, v_2)$ are a side-sharing pair with a shared side $BC$ .
\end{proof}

\begin{thm}\label{thmsideone}
  Under the optical center unchanged expression, if $(u_1, v_1)$  satisfy $cos\gamma u-cos\beta v=0$ which correspond to a solution $(O,ABC)$  of the P3P Problem, and $\alpha < \angle BAC$, then the P3P problem has another solution which has a shared side $BC$  with $(O,ABC)$ , and the corresponding $(u_2, v_2)$  also satisfy $cos\gamma u-cos\beta v=0$.
\end{thm}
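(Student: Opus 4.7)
My plan is to construct the claimed second solution by reflecting $A$ along the line $OA$ across the common perpendicular foot from $B$ and $C$, then to use the hypothesis $\alpha < \angle BAC$ together with the distance constraints of (\ref{equ1}) to verify that this reflected apex lies on the positive projection ray, and finally to invoke Theorem~\ref{thmsidetwo} to conclude that the corresponding $(u_2,v_2)$ also lies on the line $\cos\gamma u-\cos\beta v=0$.

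First I would rewrite the hypothesis using $u_1=s_{12}/s_{11}$, $v_1=s_{13}/s_{11}$ as the algebraic identity $\cos\gamma\,s_{12}=\cos\beta\,s_{13}$, which asserts geometrically that the orthogonal projections of $B$ and of $C$ onto the line $OA$ coincide at a single point $F$ with $|OF|=s_{12}\cos\gamma=s_{13}\cos\beta$; equivalently $BC\perp OA$. Define $A'$ to be the reflection of $A$ across $F$ along line $OA$, so that $F$ is the midpoint of $AA'$. Because $FB$ and $FC$ both lie in the plane through $F$ orthogonal to $OA$, the Pythagorean identity gives $|A'B|^2=|A'F|^2+|FB|^2=|AF|^2+|FB|^2=|AB|^2$ and similarly $|A'C|=|AC|$. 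Hence $(|OA'|,s_{12},s_{13})$ satisfies the three constraints in (\ref{equ1}) with the same $a,b,c$ as soon as $|OA'|>0$.

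The crux, which I expect to be the main obstacle, is showing $|OA'|=2s_{12}\cos\gamma-s_{11}>0$. Using the first equation of (\ref{equ1}) this inequality reduces to $s_{12}>c$, and by the identity $\cos\gamma s_{12}=\cos\beta s_{13}$ it is equivalent to $s_{13}>b$; I would denote the common discrepancy $D:=s_{12}^2-c^2=s_{13}^2-b^2$. Summing these two expressions for $D$ and applying the law of cosines to $\triangle OBC$ and $\triangle ABC$ (both of which share $BC$ of length $a$) yields the clean formula
\begin{equation*}
D \;=\; s_{12}s_{13}\cos\alpha \;-\; bc\cos\angle BAC.
\end{equation*}
The hypothesis $\alpha<\angle BAC$, that is $\cos\alpha>\cos\angle BAC$, is then used in combination with the positivity of $\cos\beta,\cos\gamma$ coming from $F$ lying on the same side of $O$ as $A$ to conclude $D>0$; the delicate point is that $\cos\alpha$ and $\cos\angle BAC$ are weighted by the different products $s_{12}s_{13}$ and $bc$, so a direct cosine comparison is not enough and one has to exploit the orthogonality $OA\perp BC$ to control the cross-term.

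Once $|OA'|>0$ is established, $(|OA'|,s_{12},s_{13})$ is a bona fide positive P3P solution sharing the side $BC$ with the original one. Dividing the identity $\cos\gamma\,s_{12}=\cos\beta\,s_{13}$ by $|OA'|$ gives immediately $\cos\gamma u_2-\cos\beta v_2=0$ with $u_2=s_{12}/|OA'|$, $v_2=s_{13}/|OA'|$; equivalently, applying the necessity direction of Theorem~\ref{thmsidetwo} to the side-sharing pair just constructed delivers the same conclusion.
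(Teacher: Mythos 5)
Your construction is exactly the paper's: the paper sets $s_{21}=2\cos\gamma\,s_{12}-s_{11}=2\cos\beta\,s_{13}-s_{11}$ and verifies the three constraints algebraically, and your reflection of $A$ through the common perpendicular foot $F$ on line $OA$ is the geometric form of the same step. Your reduction of $|OA'|>0$ to $D:=s_{12}^2-c^2=s_{13}^2-b^2>0$ and the identity $D=s_{12}s_{13}\cos\alpha-bc\cos\angle BAC$ are both correct. The trouble is that the one step you leave as a promissory note --- ``the hypothesis $\cos\alpha>\cos\angle BAC$ is then used \ldots\ to conclude $D>0$'' --- is the entire mathematical content of the theorem, and it does not follow from anything you have written. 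If $D\le 0$, then $s_{12}\le c$ and $s_{13}\le b$, hence $s_{12}s_{13}\le bc$; the identity then only yields $s_{12}s_{13}\cos\alpha\le bc\cos\angle BAC$, which is perfectly consistent with $\cos\alpha>\cos\angle BAC>0$. So no contradiction can be extracted from the identity together with the angle hypothesis alone. You flag this yourself (``a direct cosine comparison is not enough \ldots\ one has to exploit the orthogonality to control the cross-term''), but you never supply that argument, so the proof is incomplete precisely at its crux.

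For comparison, the paper closes this step geometrically: since $OA\perp BC$, the perpendiculars from $O$ and from $A$ to the line $BC$ have a common foot $E$, so $D=|OE|^2-|AE|^2$, and it compares $|OE|=|BE|\tan\angle OBC$ with $|AE|=|BE|\tan\angle ABC$ using $\angle OBC+\angle OCB=\pi-\alpha>\pi-\angle BAC=\angle ABC+\angle ACB$. That is the missing ingredient. Be aware, though, that this comparison is itself delicate: the angle $\angle BXC$ is monotone in the distance $|XE|$ only when $E$ lies between $B$ and $C$, and fails to be monotone when $E$ lies on the extension of $BC$ (i.e.\ when the triangle is obtuse at $B$ or $C$), so the case analysis there is where the real difficulty of the theorem resides and cannot be waved away.
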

\begin{proof}
  {\bf (By construction)}\\
Let $s_{11}=|OA|,s_{12}=|OB|,s_{13}=|OC|$  be the side lengths corresponding to the solution $(O,ABC)$ .
Since $cos\gamma u_1-cos\beta v_1=0$,
We get  $cos\gamma s_{12}-cos\beta s_{13}=0$.
Furthermore, $2cos\gamma s_{12}-s_{11}=2cos\beta s_{13}-s_{11}$.\\
Let $s_{21}=2cos\gamma s_{12}-s_{11}$  or  $s_{21}=2cos\beta s_{13}-s_{11}$,
then
\begin{gather*}
\begin{split}
&s_{21}^2 +s_{12}^2 -2cos\gamma s_{21}s_{12}\\
&= (2cos\gamma s_{12}-s_{11})^2 +s_{12}^2 -2cos\gamma (2cos\gamma s_{12}-s_{11})s_{12}\\
&= s_{11}^2 +s_{12}^2 -2cos\gamma s_{11}s_{12}=c^2
\end{split}
\end{gather*}
Similarly,
\begin{gather*}
\begin{split}
&s_{21}^2 +s_{13}^2 -2cos\beta s_{21}s_{13}\\
&=(2cos\beta s_{13}-s_{11})^2 +s_{13}^2 -2cos\beta (2cos\beta s_{13}-s_{11})s_{13}\\
&=s_{11}^2 +s_{13}^2 -2cos\beta s_{11}s_{13}=b^2
\end{split}
\end{gather*}

So $(s_{21},s_{12},s_{13})$satisfy the three constraints of the P3P Problem. Thus if $s_{21}>0$, then $(s_{21},s_{12},s_{13})$ is another solution of the P3P problem which has a shared side $BC$  with $(s_{11},s_{12},s_{13})$ . Thus by Theorem \ref{thmsidetwo}, we will conclude that the corresponding  $(u_2, v_2)$  also satisfy $cos\gamma u-cos\beta v=0$.
\begin{figure}[t]
\begin{center}
\includegraphics[width=0.5\linewidth]{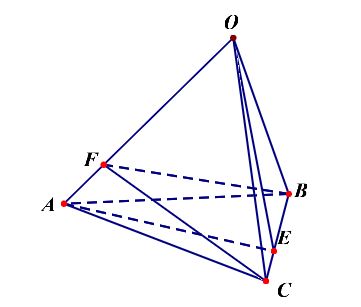}
\end{center}
   \caption{To construct another solution which has a shared side $BC$ with $(O,ABC)$.}
\label{figside}
\end{figure}
Let us next show that when $\alpha < \angle BAC$ , $s_{21}>0$.
As shown in Figure \ref{figside}, Let $CF\bot AO$ , $F$ be the foot of $C$ on segment $OA$ , then by
$s_{21}=2cos\beta s_{13}-s_{11}$, we get $s_{21}=2|OF|-|OA|=2|OF|-(|OF|+|FA|)=|OF|-|FA|$. Thus $s_{21}>0$  iff $|OF|>|FA|$. Considering $CF\bot AO$, $|OF|>|FA|$ iff $|OC|>|AC|$ .
Link $B$ and $F$, and by $cos\gamma s_{12}-cos\beta s_{13}=0$, we have $BF\bot AO$, then  $AO\bot BCF$, furthermore, $AO\bot BC$. Let $AE\bot BC$ and link $O$ and $E$, then $BC\bot AEO$, so $OE\bot BC$, then $|OC|=\sqrt{|OE|^2+|CE|^2}$,$|AC|=\sqrt{|AE|^2+|CE|^2}$, Thus $|OC|>|AC|$ iff $|OE|>|AE|$ .
By $\angle BAC>\alpha$, we have $\angle ABC +\angle ACB < \angle OBC +\angle OCB$, thus at least one holds of inequalities $\angle ABC  < \angle OBC <\pi/2 $  and $\angle ACB < \angle OCB <\pi/2$ (corresponding respectively to three situations, the first is when $E$  lies on the base side $BC$, the second is when $E$ lies on the extended side $BC$, the third is when $E$  lies on the extended side $CB$ ). Without loss of generality, let $\angle ABC  < \angle OBC <\pi/2 $ , since$|AE|=|BE|\tan\angle ABC $,$|OE|=|BE|\tan\angle OBC $, we know $|OE|>|AE|$.
\end{proof}
\begin{remark}
\item 1.The difference between Theorems \ref{thmsidetwo} and \ref{thmsideone} is that Theorem  \ref{thmsidetwo} gives the necessary and sufficient condition about a side-sharing pair, while Theorem \ref{thmsideone} gives how to construct another solution which has a common side with the known specific solution. Moreover, a side-sharing pair with a shared side $AB$ or $AC$ has similar result.

 \item 2.$\alpha < \angle BAC$	 means that the optical center $O$  lies outside the toroid by rotating the arc $BAC$ of circumcircle of the triangle $\triangle ABC$ around $BC$ .
\end{remark}
\begin{thm}\label{thmsidegeometric}
{\bf {(The geometric meaning of the side-sharing pair)}}\\
Under the optical center unchanged expression, if $(O,ABC)$  and  $(O,A'BC)$   are a side-sharing pair with a shared side $BC$ of the P3P problem, then both the optical centers $O$  and $O'$ of $(O,ABC)$  and  $(O',ABC)$ ,under three control points unchanged expression, locate on the vertical plane $\pi_1$, where $(O',ABC)$ is equivalent to $(O,A'BC)$.
\end{thm}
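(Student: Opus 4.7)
The plan is to first establish the key perpendicularity $OA \perp BC$ inside the optical center unchanged expression, use it to place the first optical center on $\pi_1$, and then transfer the conclusion to the second optical center by observing that the rigid motion linking the two expressions is an isometry fixing $B$ and $C$ pointwise. To set things up cleanly I would pick coordinates so that the base plane of $\triangle ABC$ is the $xy$-plane, the line $BC$ is the $x$-axis, and the foot $D$ of the altitude from $A$ onto $BC$ is the origin. Then $AD$ lies along the $y$-axis and the vertical direction is the $z$-axis, so the vertical plane $\pi_1$ through $AD$ is exactly the coordinate plane $\{x=0\}$; equivalently, $\pi_1$ is the plane through $D$ perpendicular to line $BC$.

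To show $O \in \pi_1$, I would recycle the geometric computation already appearing inside the necessity direction of Theorem~\ref{thmsidetwo}: the midpoint $F$ of $AA'$ satisfies $BF \perp OA$ and $CF \perp OA$, so the plane $BCF$ is perpendicular to line $OA$, which forces $BC \perp OA$. Hence $\vec{AO}$ is orthogonal to the direction of $BC$, so $O$ lies in the plane through $A$ perpendicular to $BC$. This plane contains the altitude $AD$ (because $AD\perp BC$) and also contains the vertical direction through $D$ (because the vertical is orthogonal to the base plane and hence to $BC$), so it coincides with $\pi_1$; therefore $O \in \pi_1$.

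To transfer the conclusion to $O'$, I would observe that the passage from $(O,A',B,C)$ to $(O',A,B,C)$ is realised by the rigid motion that sends $A'$ to $A$ while fixing $B$ and $C$ pointwise. Such a motion exists because $|A'B|=|AB|$ and $|A'C|=|AC|$, and $O$ is carried to $O'$ by definition of the three control points unchanged expression. Any rigid motion of $\mathbb{R}^3$ that fixes two distinct points pointwise must fix the entire line through them, so it is either a rotation about $BC$ or a reflection across a plane containing $BC$. In either case the plane $\pi_1$, being perpendicular to $BC$ at the point $D\in BC$, is mapped to itself, so $O\in\pi_1$ immediately yields $O'\in\pi_1$.

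The main obstacle is really only bookkeeping in the final step: one must verify that $\pi_1$ is fixed setwise by the rigid motion regardless of its orientation type, and one must take care not to confuse the midpoint $F$ of $AA'$ (which lies on the line $OA$) with the foot $D$ of the altitude on $BC$ used to define $\pi_1$. The geometric substance comes entirely from the perpendicularity $OA \perp BC$ that Theorem~\ref{thmsidetwo} already provides; everything else is elementary rigid geometry.
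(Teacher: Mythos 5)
Your proof is correct, and its core input is the same as the paper's: both arguments start from the perpendicularity $BC \perp OA$ (and $BC \perp OA'$) supplied by the necessity part of Theorem~\ref{thmsidetwo}. The execution differs in two places. First, to get $O \in \pi_1$ the paper drops the perpendicular from $O$ to the base plane, obtains the foot $H$, shows that the line $AH$ meets $BC$ orthogonally (so that $H$ lies on the altitude line $AD$), and concludes; you instead note that $BC \perp OA$ already places $O$ in the plane through $A$ perpendicular to the line $BC$, and that this plane \emph{is} $\pi_1$ because it contains both $AD$ and the vertical direction. Second, and more substantially, for $O'$ the paper re-runs the projection construction inside the plane of $A'BC$ to place $O$ on the vertical plane of that triangle and then identifies that plane with $\pi_1$ after superposition, whereas you dispose of $O'$ in one stroke: the rigid motion $g$ realizing the change of expression fixes $B$ and $C$, hence fixes the line $BC$ pointwise, hence maps the plane perpendicular to $BC$ at $D$ (namely $\pi_1$) onto itself, so $O \in \pi_1$ gives $O' = g(O) \in \pi_1$ directly. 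Your version of this step is cleaner and more robust: it works uniformly whether $g$ is a rotation about $BC$ or a reflection in a plane containing $BC$, and it sidesteps the paper's second projection construction. The only hypothesis you should keep explicit is that the pair consists of two \emph{distinct} solutions, so that $A \neq A'$ and the midpoint argument yielding $BC \perp OA$ is non-degenerate; the paper makes the same tacit assumption.
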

\begin{figure}[t]
\begin{center}
\includegraphics[width=0.5\linewidth]{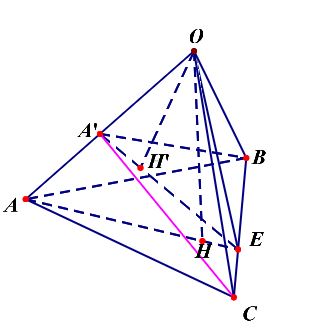}
\end{center}
   \caption{The geometric meaning of the side-sharing pair.}
\label{sidegeo}
\end{figure}
\begin{proof}
By the necessity of Theorem \ref{thmsidetwo}, we know $BC \bot OA$ .

As shown in Figure \ref{sidegeo}, Let $OH \bot \triangle ABC$, $H$ is the foot. Link $A$  and $H$, let the intersection of line $AH$  and line $BC$ is $E$, then by $BC \bot OH$ and $BC \bot OA$, $BC \bot AE$ is obvious. So the optical center $O$  of $(O,ABC)$   locate on the vertical plane $\pi_{BC}$.
Link $A'$ and $E$, by  $\triangle ABC$ is congruent to $\triangle A'BC$, $BC \bot A'E$  holds.
Let $OH \bot A'E$, $H'$ is the foot.  By  $BC \bot OA'$   and $BC \bot A'E$, we have $ OH' \bot BC$, then $ OH' \bot \triangle A'BC$, so the optical center  $O$  of $(O,A'BC)$  locate on the vertical plane perpendicular to the plane $A'BC$ and going through the altitude from $A'$ , that is, the optical center $O'$  of $(O',ABC)$   being equivalent to $(O,A'BC)$   locate on the vertical plane $\pi_{BC}$ under three control points unchanged expression.

Similarly, we can prove that a side-sharing pair with a shared side $AC$ or $AB$  of the P3P problem under the optical center unchanged expression, then both the corresponding optical centers locate on the vertical plane $\pi_2$ and $\pi_3$ respectively under three control points unchanged expression.
\end{proof}

\subsection{A point-sharing pair and the geometric meaning}
In this section, we show that three theorems about the point-sharing pair:  Theorem \ref{thmpointtwo} is the necessary and sufficient condition,  Theorem \ref{thmpointone} is a sufficient condition,  Theorem \ref{thmpointgeometric} is the geometric meaning that the optical centers of point-sharing solutions must lie on one of three surfaces, which look like a modulated danger cylinder, we called it the skewed danger cylinder in this work.

\begin{thm}\label{thmpointtwo}
 Under the optical center unchanged expression, the necessary and sufficient condition for the P3P problem to have a point-sharing pair with a shared point $A$  is both the corresponding different  $(u_1, v_1)$ and $(u_2, v_2)$  satisfy the equation
   \begin{gather*}\label{equpoint}
   \frac{ cos\angle ACB}{cos\gamma}bu+\frac{cos \angle ABC}{cos\beta}cv-a=0
  \end{gather*}

\end{thm}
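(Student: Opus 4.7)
The plan is to introduce the auxiliary linear form $L(u,v) := (\cos\gamma - \cos\alpha\cos\beta)(u - \cos\gamma) + (\cos\beta - \cos\alpha\cos\gamma)(v - \cos\beta)$, show that a point-sharing pair with shared $A$ corresponds exactly to two distinct P3P solutions lying on $L = 0$, and then prove a bridging identity showing that $L = 0$ and the target line coincide on the P3P solution set. For necessity, let $(O,(A,B,C))$ and $(O,(A,B',C'))$ be a point-sharing pair sharing $A$, with $s_1 = |OA|$ common to both solutions. The equations $|AB| = |AB'| = c$ force $|OB|$ and $|OB'|$ to be the two roots of $X^2 - 2 s_1 \cos\gamma\,X + (s_1^2 - c^2) = 0$, so by Vieta $u_1 + u_2 = 2\cos\gamma$, and analogously $v_1 + v_2 = 2\cos\beta$; the midpoint $(\cos\gamma,\cos\beta)$ clearly lies on $L = 0$. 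Subtracting $C_1(u_1,v_1) = C_1(u_2,v_2) = 0$ and substituting the Vieta relations simplifies to $(u_1 - u_2)(\cos\gamma - \cos\alpha\cos\beta) + (v_1 - v_2)(\cos\beta - \cos\alpha\cos\gamma) = 0$, so the chord through the two solutions is parallel to $L$; combined with the midpoint being on $L$, both $(u_i, v_i)$ satisfy $L = 0$.

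The bridging identity is the algebraic relation $a\cos\beta\cos\gamma \cdot \bigl(\frac{\cos\angle ACB}{\cos\gamma} b u + \frac{\cos\angle ABC}{\cos\beta} c v - a\bigr) = s_2 s_3 \cdot L(u,v)$, valid for every $(s_1, s_2, s_3)$ satisfying the three P3P constraints (with $u = s_2/s_1$, $v = s_3/s_1$). I would prove it by expressing $ab\cos\angle ACB = s_3^2 + s_1 s_2\cos\gamma - s_1 s_3\cos\beta - s_2 s_3\cos\alpha$ and $ac\cos\angle ABC = s_2^2 + s_1 s_3\cos\beta - s_1 s_2\cos\gamma - s_2 s_3\cos\alpha$ (obtained by expanding $\overrightarrow{CA}\cdot\overrightarrow{CB}$ and $\overrightarrow{BA}\cdot\overrightarrow{BC}$ with $O$ at the origin), substituting into the left-hand side, and then applying $a^2 = s_2^2 + s_3^2 - 2\cos\alpha s_2 s_3$ to simplify the result to $s_2 s_3 \cdot L(u,v)$. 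Since $a, s_2, s_3, \cos\beta, \cos\gamma$ are all positive on a P3P solution, the target equation and $L = 0$ pick out the same points on the solution set, so the two point-sharing solutions also satisfy the target equation, finishing necessity.

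For sufficiency, suppose two distinct P3P solutions $(u_1, v_1), (u_2, v_2)$ satisfy the target equation. By the bridging identity both satisfy $L = 0$. A direct calculation shows that $\nabla C_1$ and $\nabla C_2$ evaluated at $(\cos\gamma, \cos\beta)$ are each proportional to the normal vector of $L$; hence, parametrizing $L$ by $(u, v) = (\cos\gamma, \cos\beta) + s\,d$ with $d$ along $L$, the restriction $C_1|_L$ is a quadratic in $s$ whose linear term $\nabla C_1 \cdot d$ vanishes, so by Vieta the two roots sum to zero. The two P3P solutions on $L$ are precisely these two intersections with $C_1$, hence are symmetric about $(\cos\gamma, \cos\beta)$, giving $u_1 + u_2 = 2\cos\gamma$ and $v_1 + v_2 = 2\cos\beta$. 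Via $s_{i1} = c/\sqrt{1 + u_i^2 - 2\cos\gamma u_i}$ this forces $s_{11} = s_{21}$, so the two solutions share the point $A$. The analogous results for shared $B$ or $C$ follow by relabeling.

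The hard part is the bridging identity: the target coefficients mix triangle data $(a, \cos\angle ABC, \cos\angle ACB)$ with projection-angle data $(\cos\beta, \cos\gamma)$, while $L$ depends only on the projection angles, so the proportionality of the two coefficient vectors is not at all transparent a priori; making them agree on the P3P variety requires using all three constraints together with the projection formula $a = b\cos\angle ACB + c\cos\angle ABC$.
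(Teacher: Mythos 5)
Your proof is correct in substance but takes a genuinely different route from the paper's. The paper proves necessity by writing out all six distance constraints, subtracting them to get $s_{22}=2\cos\gamma\, s_{11}-s_{12}$ and $s_{23}=2\cos\beta\, s_{11}-s_{13}$ (your Vieta relations), and then substituting directly into the third constraint of the second solution, which factors explicitly as a multiple of the target linear form (its equation (\ref{point})); for sufficiency it substitutes the target line into the conic $C_1$ and computes that $1+v^2-2\cos\beta v$ is constant along that line, forcing $s_{11}=s_{21}$. You instead route everything through the auxiliary line $L(u,v)=(\cos\gamma-\cos\alpha\cos\beta)(u-\cos\gamma)+(\cos\beta-\cos\alpha\cos\gamma)(v-\cos\beta)$: the chord-midpoint argument on $C_1$ (which I checked: the difference $C_1(u_1,v_1)-C_1(u_2,v_2)$ does reduce, under the Vieta relations, to $-2b^2$ times the stated orthogonality condition) puts both solutions on $L=0$, the gradient computation $\nabla C_1(\cos\gamma,\cos\beta)=-2b^2\,(\cos\gamma-\cos\alpha\cos\beta,\ \cos\beta-\cos\alpha\cos\gamma)$ reverses the argument for sufficiency, and the bridging identity $a\cos\beta\cos\gamma\bigl(\tfrac{\cos\angle ACB}{\cos\gamma}bu+\tfrac{\cos\angle ABC}{\cos\beta}cv-a\bigr)=s_2s_3\,L(u,v)$ (which I verified using your dot-product expansions and the constraint $a^2=s_2^2+s_3^2-2\cos\alpha\, s_2s_3$) identifies $L=0$ with the paper's line on the solution set. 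What your approach buys is a cleaner geometric picture --- $(\cos\gamma,\cos\beta)$ is the common midpoint of all point-sharing chords, and the relevant line has coefficients depending only on the subtended angles, with the mixed triangle/angle coefficients of the paper's form explained by the bridging identity; what the paper's computation buys is the explicit factorization (\ref{point}) that is reused in Theorem \ref{thmpointone} and in the companion-pair theorem.

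Two small gaps you should patch. First, in the sufficiency direction you stop at $s_{11}=s_{21}$ and declare a shared point $A$; to get a genuine \emph{point}-sharing pair you must also rule out $u_1=u_2$ or $v_1=v_2$ (which would make the pair share a whole side, or coincide) --- the paper does this by appealing to Theorem \ref{thmsidetwo} and Lemma \ref{lemmaRieck}. Second, your symmetry argument for sufficiency tacitly assumes the restriction of $C_1$ to $L$ is a nondegenerate quadratic (i.e.\ $L\not\subset C_1$ and the quadratic coefficient is nonzero) and that the normal vector of $L$ is nonzero; these degeneracies only occur for excluded configurations ($\cos\beta=0$ or $\cos\gamma=0$ or $\cos\alpha=\pm1$), but a sentence dismissing them is needed.
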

\begin{proof}
{ \bf (Necessity) }\\
Suppose the distances between the optical center and each control point of the point-sharing pair with a shared point $A$  are $(s_{11},s_{12},s_{13})$  and $(s_{11},s_{22},s_{23})$  respectively, then

 \begin{numcases}{}
    s_{11}^2 +s_{12}^2 -2cos\gamma s_{11}s_{12} =c^2 \label{a1}\\
    s_{11}^2 +s_{13}^2 -2cos\beta s_{11}s_{13}  =b^2 \label{a2} \\
    s_{12}^2 +s_{13}^2 -2cos\alpha s_{12}s_{13} =a^2 \label{a3}
  \end{numcases}

  \begin{numcases}{}
    s_{11}^2 +s_{22}^2 -2cos\gamma s_{11}s_{22} =c^2  \label{b1}\\
    s_{11}^2 +s_{23}^2 -2cos\beta s_{11}s_{23}  =b^2  \label{b2}\\
    s_{22}^2 +s_{23}^2 -2cos\alpha s_{22}s_{23} =a^2  \label{b3}
  \end{numcases}

By subtracting equations (\ref{b1}), (\ref{b2}) ,(\ref{a1}) from (\ref{a1}) ,(\ref{a2}), (\ref{a2}) respectively , we get

\begin{gather}
    s_{22}=2cos\gamma s_{11}-s_{12}  \label{c1} \\
    s_{23}=2cos\beta s_{11}-s_{13}  \label{c2} \\
    s_{13}^2-s_{12}^2 -2cos\beta s_{11}s_{13}+2cos\gamma s_{11}s_{12} =b^2 - c^2 \label{c3}
\end{gather}
Substitute (\ref{a3}), (\ref{c1}), (\ref{c2}), (\ref{c3}) into the left hand side of (\ref{b3}), we get
\begin{gather}
\begin{split}
&s_{22}^2 +s_{23}^2 -2cos\alpha s_{22}s_{23}-a^2\\
&=\frac{-4acos\beta cos\gamma s_{11}^2}{s_{12}s_{13}} \left ( \frac{ cos\angle ACB}{cos\gamma}bu_1+\frac{cos \angle ABC}{cos\beta}cv_1-a \right ) \label{point}
\end{split}
\end{gather}

So $\frac{ cos\angle ACB}{cos\gamma}bu_1+\frac{cos \angle ABC}{cos\beta}cv_1-a=0$.
Similarly, we can prove $\frac{ cos\angle ACB}{cos\gamma}bu_2+\frac{cos \angle ABC}{cos\beta}cv_2-a=0$.

{\bf (Sufficiency)}
Since $\frac{ cos\angle ACB}{cos\gamma}bu_i+\frac{cos \angle ABC}{cos\beta}cv_i-a=0, i=1,2$, then substitute them into Equation (\ref{conic1}) , we have
\begin{gather}
 1+v_i^2-2cos\beta v_i=\frac{M-N}{1-N} ,i=1,2 \label{pointsi1}
\end{gather}
where
$N=(\frac{cos \angle ABC cos \gamma}{cos \angle ACB cos \beta})^2$,\\$ M=\frac{acos^2\gamma}{c^2cos\angle ACB cos^2\beta}\left (b-\frac{a}{cos \angle ACB }\right)+\frac{b^2}{c^2}$

Considering $s_{i1}^2=\frac{b^2}{1+v_i^2-2cos\beta v_i},i=1,2$, and Equation (\ref{pointsi1}) always holds for $v_1$  and $v_2$ , so we get $s_{11}=s_{21}$. By $(u_1,v_1)\neq(u_2,v_2)$  and Theorem \ref{thmsidetwo} and Lemma \ref{lemmaRieck}, we know $u_1\neq u_2$, $v_1\neq v_2$, so  $s_{12}\neq s_{22}$, $s_{13}\neq s_{23}$ , it follows that the two solutions corresponding to $(u_1,v_1)$  and   $(u_2,v_2)$ respectively are a point-sharing pair with a shared point $A$ .
\end{proof}

\begin{thm}\label{thmpointone}
Under the optical center unchanged expression, if $(u_1,v_1)$  satisfy $\frac{ cos\angle ACB}{cos\gamma}bu+\frac{cos \angle ABC}{cos\beta}cv-a=0$  which correspond to a solution  $(O,ABC)$ of the P3P Problem and $\beta< \angle ABC$, $\gamma< \angle ACB$,  then the P3P problem has another solution which has a shared point $A$  with $(O,ABC)$ , and the corresponding $(u_2,v_2)$  also satisfy $\frac{ cos\angle ACB}{cos\gamma}bu+\frac{cos \angle ABC}{cos\beta}cv-a=0$.
\end{thm}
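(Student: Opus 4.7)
The approach is constructive, paralleling Theorem \ref{thmsideone}. Writing $s_{11}=|OA|$, $s_{12}=|OB|$, $s_{13}=|OC|$ for the distances of the given solution, I would propose the candidate
\[
s_{22}:=2\cos\gamma\,s_{11}-s_{12},\qquad s_{23}:=2\cos\beta\,s_{11}-s_{13},
\]
which keeps $s_{11}$ fixed so that, if the resulting triple is positive and satisfies (\ref{equ1}), it yields a second solution sharing the control point $A$ with $(O,ABC)$ under the optical-centre-unchanged expression.

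The first two constraints of (\ref{equ1}) would follow by direct expansion, exactly as in the proof of Theorem \ref{thmsideone}: each of $s_{11}^{2}+s_{22}^{2}-2\cos\gamma\,s_{11}s_{22}$ and $s_{11}^{2}+s_{23}^{2}-2\cos\beta\,s_{11}s_{23}$ collapses to its counterpart with $(s_{12},s_{13})$ replacing $(s_{22},s_{23})$, giving $c^{2}$ and $b^{2}$ respectively. The more delicate third constraint would be discharged by invoking identity (\ref{point}) from the necessity proof of Theorem \ref{thmpointtwo}, which expresses $s_{22}^{2}+s_{23}^{2}-2\cos\alpha\,s_{22}s_{23}-a^{2}$ as a (generically nonzero) multiple of $\tfrac{\cos\angle ACB}{\cos\gamma}bu_{1}+\tfrac{\cos\angle ABC}{\cos\beta}cv_{1}-a$; this factor vanishes by the linear-equation hypothesis, so the $a^{2}$ constraint is also satisfied.

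The main obstacle is establishing positivity $s_{22},s_{23}>0$. A Vieta's-rule argument on the law-of-cosines quadratic $s^{2}-2\cos\gamma\,s_{11}s+(s_{11}^{2}-c^{2})=0$ (whose two roots are $s_{12}$ and $s_{22}$) gives $s_{12}s_{22}=s_{11}^{2}-c^{2}$, hence $s_{22}>0\iff|OA|>c$; symmetrically $s_{23}>0\iff|OA|>b$. To translate the angle hypotheses into these distance inequalities, I would mirror the altitude/perpendicular-foot argument that closes the proof of Theorem \ref{thmsideone}: drop the perpendicular from $A$ onto line $OB$ with foot $Q$, so that $s_{22}=|OQ|-|QB|$, and reduce $|OA|>c$ to a Pythagorean comparison between the two right triangles $OQA$ and $QAB$; then deploy $\gamma<\angle ACB$ to secure the corresponding in-plane angle comparison (in the spirit of how $\alpha<\angle BAC$ was used to get $|OC|>|AC|$ in the side-sharing case). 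An entirely symmetric argument with $OB$ replaced by $OC$ and $\gamma<\angle ACB$ replaced by $\beta<\angle ABC$ would yield $|OA|>b$. The angle-to-distance translation here is the principal technical hurdle, since the point-sharing geometry is less transparent than its side-sharing counterpart.

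Once both positivities are secured, $(s_{11},s_{22},s_{23})$ is a genuine P3P solution sharing the control point $A$ with $(s_{11},s_{12},s_{13})$, and Theorem \ref{thmpointtwo} applied to this point-sharing pair immediately concludes that the corresponding $(u_{2},v_{2})=(s_{22}/s_{11},s_{23}/s_{11})$ also satisfies $\tfrac{\cos\angle ACB}{\cos\gamma}bu+\tfrac{\cos\angle ABC}{\cos\beta}cv-a=0$.
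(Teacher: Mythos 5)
Your construction and the algebraic verification coincide with the paper's proof: the same candidate $(s_{11},s_{22},s_{23})$ with $s_{22}=2\cos\gamma\,s_{11}-s_{12}$ and $s_{23}=2\cos\beta\,s_{11}-s_{13}$, the first two constraints checked by direct expansion, the third discharged via identity (\ref{point}), and positivity reduced to $s_{22}>0\iff s_{11}>c$ and $s_{23}>0\iff s_{11}>b$ (your Vieta computation is the paper's substitution $s_{11}^{2}-s_{13}s_{23}=b^{2}$ in disguise). The gap is in the step you yourself flag as the ``principal technical hurdle'': converting $\beta<\angle ABC$ and $\gamma<\angle ACB$ into those distance inequalities. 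First, your pairing is reversed relative to what the geometry supports: the paper establishes $s_{11}>b\iff\cos\gamma>\cos\angle ACB$ and $s_{11}>c\iff\cos\beta>\cos\angle ABC$, i.e.\ comparing the two angles that subtend $AB$ (namely $\gamma$ at $O$ and $\angle ACB$ at $C$) governs the comparison of the two distances to $A$ (namely $|OA|$ and $|CA|=b$) --- not $|OA|$ versus $c$ as you assert.

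Second, and more seriously, mirroring the in-plane argument of Theorem \ref{thmsideone} cannot deliver either implication. In the side-sharing case the hypothesis $\alpha<\angle BAC$ was usable because $AO\perp BC$, which reduced everything to a comparison of $\tan\angle OBC$ and $\tan\angle ABC$ at the shared vertex $B$ along the shared ray $BC$. Here $\gamma=\angle AOB$ and $\angle ACB$ live at different vertices with no common ray and no analogous perpendicularity; dropping the foot $Q$ from $A$ onto $OB$ only reduces $s_{22}>0$ to $\angle ABO>\gamma$, and $\angle ABO$ is related to neither $\angle ABC$ nor $\angle ACB$ by any in-plane argument. The paper's bridge is genuinely three-dimensional: it projects $A$ orthogonally onto the plane $BOC$ with foot $H$, uses the congruence $\triangle ABC\cong\triangle AB'C'$ of the point-sharing pair to show that $B,O,C,H$ are concyclic, hence $\angle BOH=\angle BCH$, and then compares $\cos\gamma=\sqrt{1-|AH|^{2}/s_{11}^{2}}\,\cos\angle BOH$ with $\cos\angle ACB=\sqrt{1-|AH|^{2}/b^{2}}\,\cos\angle BCH$, so that the sign of $\cos\gamma-\cos\angle ACB$ equals the sign of $s_{11}-b$. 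That concyclicity argument is the missing idea; without it (or a substitute) the positivity of $s_{22}$ and $s_{23}$ --- and hence the existence of the second solution --- is not established.
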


\begin{proof}
 {\bf (By construction)}\\
Let $s_{11}=|OA|,s_{12}=|OB|,s_{13}=|OC|$  be the side lengths corresponding to the solution $(O,ABC)$.\\
Since $\frac{ cos\angle ACB}{cos\gamma}bu_1+\frac{cos \angle ABC}{cos\beta}cv_1-a=0$ , we have
$$\frac{ cos\angle ACB}{cos\gamma}bs_{12}+\frac{cos \angle ABC}{cos\beta}cs_{13}-as_{11}=0$$
Let $s_{22}=2cos\gamma s_{11}-s_{12}$  and $s_{23}=2cos\beta s_{11}-s_{13}$, then
\begin{gather*}
\begin{split}
&\frac{ cos\angle ACB}{cos\gamma}bs_{22}+\frac{cos \angle ABC}{cos\beta}cs_{23}-as_{11}=0 \label{pointone}
\end{split}
\end{gather*}
So $(u_2,v_2)$  corresponding to $(s_{11},s_{22},s_{23})$ also satisfy $\frac{ cos\angle ACB}{cos\gamma}bu+\frac{cos \angle ABC}{cos\beta}cv-a=0$.\\
Let us next show that $(s_{11},s_{22},s_{23})$  is a solution of this P3P problem.\\
Firstly, substituting $(s_{11},s_{22},s_{23})$   into the left hand sides of the three basic constraints
of the P3P problem respectively, we get
\begin{gather*}
\begin{split}
 &s_{11}^2 +s_{22}^2 -2cos\gamma s_{11}s_{22} -c^2 \\
&=s_{11}^2+(2cos\gamma s_{11}-s_{12})^2-2cos\gamma s_{11}(2cos\gamma s_{11}-s_{12})-c^2\\
&=0
\end{split}
\end{gather*}
Similarly,  $ s_{11}^2 +s_{23}^2 -2cos\beta s_{11}s_{23}  -b^2=0$  also holds.
For  $s_{22}^2 +s_{23}^2 -2cos\alpha s_{22}s_{23} -a^2$, by equation (\ref{point}) of Theorem \ref{thmpointtwo}, we know  $s_{22}^2 +s_{23}^2 -2cos\alpha s_{22}s_{23} -a^2=0$   holds when
$\frac{ cos\angle ACB}{cos\gamma}bu_1+\frac{cos \angle ABC}{cos\beta}cv_1-a=0$.\\
Secondly, if $s_{22}>0, s_{23}>0$, then $(s_{11},s_{22},s_{23})$  is another solution of the P3P problem which has a shared point $A$  with $(s_{11},s_{12},s_{13})$ .\\
It follows that we will prove when  $\beta< \angle ABC$ and $\gamma< \angle ACB$,  $s_{22}>0$ and $s_{23}>0$  hold.\\
Substituting $s_{23}=2cos\beta s_{11}-s_{13}$  into  $ s_{11}^2 +s_{23}^2 -2cos\beta s_{11}s_{23}  -b^2=0$, we get
$$ s_{11}^2- s_{13}s_{23}=b^2$$
$$ s_{23}=\frac{s_{11}^2-b^2} {s_{13}}$$
so $s_{23}>0$   is equivalent to $s_{11}>b$ .\\
Similarly, $s_{22}>0$   is equivalent to $s_{11}>c$ .\\
Next we prove that $s_{11}>b$  is equivalent to $cos\gamma > cos \angle BCA$ .
\begin{figure}[t]
\begin{center}
\includegraphics[width=0.5\linewidth]{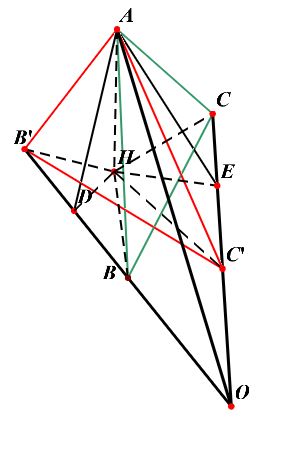}
\end{center}
   \caption{$\{O, (A, B, C)\}$ and $\{O, (A, B', C')\}$ are a point-sharing pair of solutions.}
\label{figpoint3}
\end{figure}
If  $(s_{11},s_{22},s_{23})$   is another solution of the P3P problem, then as shown in Figure \ref{figpoint3}, Let $AH \perp plane BOC$  ,  $H$ is the foot. And let $AD \perp OB, AE \perp OC$ , $D,E$  are the feet respectively. By $\triangle ABC$  is congruent to $\triangle AB'C'$  , we have $|BC|=|B'C'|$, and  $D,E$  are midpoint of segment $BB'$  and $CC'$  respectively, so $|HB|=|HB'|$  and $|HC|=|HC'|$, Furthermore,  $\triangle HBC$  is congruent to $\triangle HB'C'$, it is obvious $\angle BHC = \angle B'HC'$ , thus $\angle BHB' = \angle CHC'$  , then isosceles triangles $\triangle BHB'$  and $\triangle CHC'$  are similar ones. we get  $\angle B'BH = \angle C'CH$ , so
$\angle OBH + \angle OCH =\pi$, then four points $B,O,C,H$  are cocyclic, we have $\angle BOH = \angle BCH$.

we note that
 $$cos\gamma=\sqrt{1-\frac{|AH|^2}{s_{11}^2}}cos\angle BOH$$
and
 $$cos\angle BCA=\sqrt{1-\frac{|AH|^2}{b^2}}cos\angle BCH$$
so $s_{11}>b$  is equivalent to $cos\gamma > cos \angle BCA$. Similarly, we can prove that  $s_{11}>c$  is equivalent to $cos\beta > cos \angle ABC$. Thus when $cos\beta > cos \angle ABC$ and $cos\gamma > cos \angle BCA$, $(s_{11},s_{22},s_{23})$ is another solution which has a shared point  $A$  with $(s_{11},s_{12},s_{13})$ .
\end{proof}
\begin{remark}
Inequalities $\beta< \angle ABC$ and $\gamma< \angle ACB$ mean that the optical center  $O$ lies outside two toroids by rotating the arc $ABC$  and $ACB$ of circumcircle of the triangle $\triangle ABC$ around $AC$  and $AB$ respectively.
\end{remark}
\begin{thm}\label{thmpointgeometric}
{\bf {(The geometric meaning of the point-sharing pair)}}\\
Under the optical center unchanged expression, if $(O,ABC)$  and $(O,AB'C')$  are a point-sharing pair with a shared side $A$  of the P3P Problem, then both the optical centers $O$  and $O'$  of  $(O,ABC)$  and $(O',ABC)$   locate on a cubic surface, where $(O',ABC)$ is equivalent to $(O,AB'C')$.
\end{thm}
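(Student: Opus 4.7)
The plan is to turn the linear algebraic condition supplied by Theorem \ref{thmpointtwo} into a polynomial equation in the Cartesian coordinates of the optical centre, and then verify by a leading-term computation that the resulting polynomial has total degree three.

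I would work in a coordinate frame with the plane of $A,B,C$ as the $xy$-plane and $O=(x,y,z)$. Writing $\vec{a}=O-A$, $\vec{b}=O-B$, $\vec{c}=O-C$, so that the Law of Cosines in triangles $OAB$ and $OAC$ gives $\cos\gamma\cdot s_1 s_2=\vec{a}\cdot\vec{b}$ and $\cos\beta\cdot s_1 s_3=\vec{a}\cdot\vec{c}$, I would substitute these identities along with $u=s_2/s_1$ and $v=s_3/s_1$ into the line equation of Theorem \ref{thmpointtwo}. Clearing the single factor $s_1(\vec{a}\cdot\vec{b})(\vec{a}\cdot\vec{c})$ from the denominator converts the line condition into the polynomial identity
\begin{equation*}
b\cos\angle ACB\cdot s_2^2\,(\vec{a}\cdot\vec{c})+c\cos\angle ABC\cdot s_3^2\,(\vec{a}\cdot\vec{b})-a\,(\vec{a}\cdot\vec{b})(\vec{a}\cdot\vec{c})=0
\end{equation*}
in the coordinates $(x,y,z)$. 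Every factor appearing here is a quadratic polynomial in $(x,y,z)$ whose leading homogeneous part equals $x^2+y^2+z^2$, so each summand is a priori quartic.

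The next step is to show that the quartic components cancel. The coefficient of $(x^2+y^2+z^2)^2$ in the whole expression works out to $b\cos\angle ACB+c\cos\angle ABC-a$, which vanishes identically by the classical projection formula $a=b\cos\angle ACB+c\cos\angle ABC$ in triangle $ABC$. Consequently the left-hand side is a polynomial of total degree at most three in $(x,y,z)$, and a short inspection of the degree-three piece shows that it is not identically zero for a non-degenerate base triangle. Thus every point-sharing optical centre $O$ lies on a genuine cubic surface determined entirely by $A,B,C$ and the shared vertex.

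The extension to the companion centre $O'$ is automatic: Theorem \ref{thmpointtwo} asserts that both $(u_1,v_1)$ and $(u_2,v_2)$ satisfy the same line equation, and the image-determined angles $\beta,\gamma$ are common to both centres, so performing the identical substitution with $O'$ in place of $O$ yields the same cubic identity for $O'$. The analogous statements for a shared vertex $B$ or $C$ then follow by a cyclic relabelling. The main obstacle is the degree-four cancellation bookkeeping: a direct expansion of the three quartic products is unpleasant, and the argument only collapses to a line of algebra once one isolates the degree-two leading part $x^2+y^2+z^2$ common to every factor and applies the projection identity; after that, only the easy subsidiary check that the surviving cubic is truly of degree three remains.
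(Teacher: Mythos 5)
Your proposal is correct and, at the top level, follows the same strategy as the paper: convert the line condition of Theorem \ref{thmpointtwo} into a polynomial equation in the Cartesian coordinates of the optical center, and identify that polynomial as a cubic. Where you differ is in how the cubic degree is established. The paper fixes explicit coordinates $A=(e,f,0)$, $B=(0,0,0)$, $C=(a,0,0)$, expands everything by brute force, and exhibits the concrete equation $y\cdot(\text{danger cylinder})=z^2(e^2-fy-ae)$, which is what motivates the name ``skewed danger cylinder'' and makes the relation to the ordinary danger cylinder visible. You instead stay coordinate-free: writing $\cos\gamma\,s_1s_2=\vec a\cdot\vec b$ and $\cos\beta\,s_1s_3=\vec a\cdot\vec c$ and clearing denominators yields
\begin{equation*}
b\cos\angle ACB\; s_2^2\,(\vec a\cdot\vec c)+c\cos\angle ABC\; s_3^2\,(\vec a\cdot\vec b)-a\,(\vec a\cdot\vec b)(\vec a\cdot\vec c)=0,
\end{equation*}
whose three quartic summands all have leading part $(x^2+y^2+z^2)^2$ with coefficients $b\cos\angle ACB$, $c\cos\angle ABC$, $-a$, so the projection formula $a=b\cos\angle ACB+c\cos\angle ABC$ kills the degree-four part. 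This is a cleaner and more conceptual explanation of why the degree drops to three, and it also makes transparent that the surface depends only on the triangle $ABC$, which is exactly why the companion center $O'$ (sharing $\alpha,\beta,\gamma$ and satisfying the same line by Theorem \ref{thmpointtwo}) lies on the same surface. Two minor points you should make explicit: (i) multiplying by $(\vec a\cdot\vec b)(\vec a\cdot\vec c)$ can only enlarge the zero locus, which is harmless for the one-directional claim of the theorem but worth a remark; (ii) the ``short inspection'' of the degree-three part deserves a line, since the same bookkeeping gives it as $a\,(x^2+y^2+z^2)\,\bigl(\vec p\cdot\overrightarrow{DA}\bigr)$, where $\vec p$ is the position vector of $O$ and $D$ is the foot of the altitude from $A$ to $BC$ (again using the projection formula to simplify the coefficients); this is a nonzero polynomial for any nondegenerate triangle, so the surface is genuinely cubic. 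The trade-off is that your argument, unlike the paper's, never produces the explicit equation of the skewed danger cylinder, which the paper needs for its Figure and for the geometric interpretation advertised in the abstract.
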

\begin{figure}[t]
\begin{center}
\includegraphics[width=0.7\linewidth]{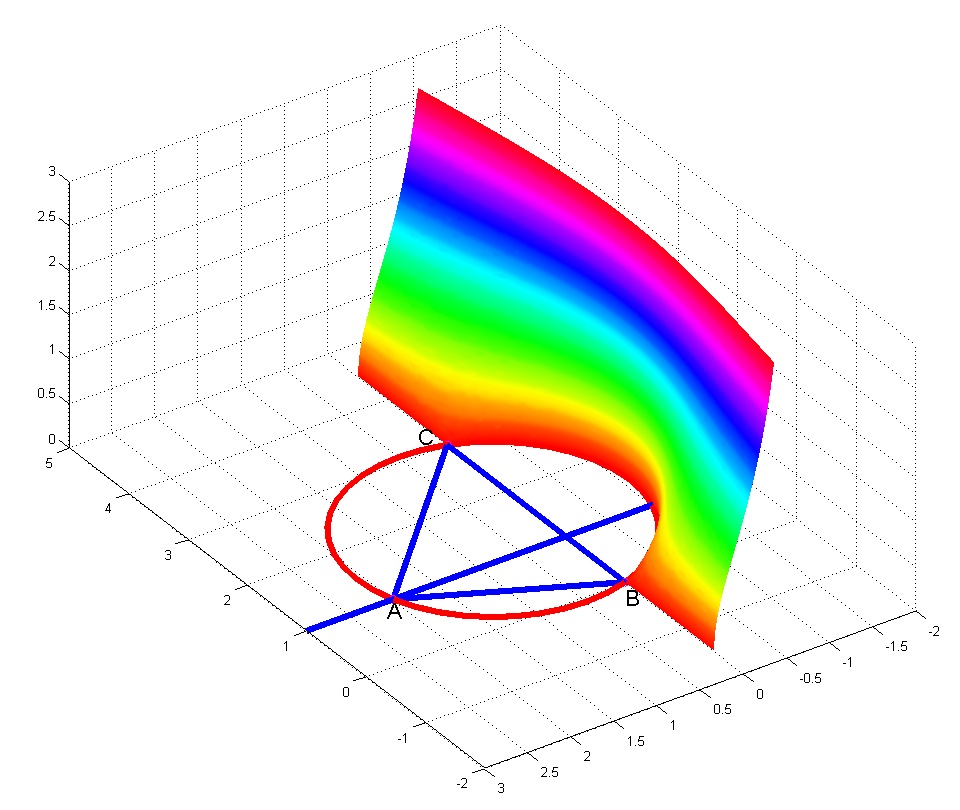}
\end{center}
   \caption{A graphical representation of the skewed danger cylinder in equation(\ref{skew}).}
\label{fig:skew}
\end{figure}
\begin{proof}
 Without loss of generality,suppose $(O,ABC)$  is one of the point-sharing pair,  let the coordinates of three control points are $A=(e,f,0),B=(0,0,0),C=(a,0,0)$,the optical center is $O=(x,y,z)$ , it is easy to get\\
$b^2=f^2+(a-e)^2, c^2=e^2+f^2$ \\
$s_{11}=\sqrt{(x-e)^2)+(y-f)^2+z^2}$\\
$s_{12}=\sqrt{x^2+y^2+z^2}$\\
$s_{13}=\sqrt{(x-a)^2+y^2+z^2}$\\
$cos\beta=\frac{s_{11}^2+s_{13}^2-b^2}{2s_{11}s_{13}}, cos\gamma=\frac{s_{11}^2+s_{12}^2-c^2}{2s_{11}s_{12}}$.
By Theorem \ref{thmpointtwo},  $\frac{ cos\angle ACB}{cos\gamma}bu_1+\frac{cos \angle ABC}{cos\beta}cv_1-a=0$  hold, that is,
$ \frac{ cos\angle ACB}{cos\gamma}bs_{12}+\frac{cos \angle ABC}{cos\beta}cs_{13}-as_{11}=0 $
Substituting all parameters  into the equation,we have
 {\footnotesize
\begin{align}
&y\cdot \underbrace{\left[\left(x-\frac{a}{2}\right )^2+\left(y-\frac{e^2-ae+f^2}{2f}\right)^2 -\frac{(e^2+f^2)((a-e)^2+f^2)}{4f^2}\right]} \notag \\
& \hskip 14.5em danger ~ cylinder  \notag \\
& =z^2(e^2 -fy-ae) \label{skew}
\end{align}}
Where equation $\left(x-\frac{a}{2}\right )^2+\left(y-\frac{e^2-ae+f^2}{2f}\right)^2 -\frac{(e^2+f^2)((a-e)^2+f^2)}{4f^2}=0$  is the danger cylinder of the P3P problem, so equation \ref{skew} is called the skewed danger cylinder(see Figure ), and it is a cubic surface, so the theorem is proved.
\end{proof}
\begin{remark}
In this work, Equations $cos\gamma u-cos\beta v=0$ and $\frac{ cos\angle ACB}{cos\gamma}bu+\frac{cos \angle ABC}{cos\beta}cv-a=0$  are called constraints of the side-sharing pair and the point-sharing pair.
\end{remark}
\subsection {Companion pairs of the side-sharing pair and the point-sharing pair}
The side-sharing pair and point-sharing pair of solutions have a close relationship, which can be formally stated as:
\begin{thm}
 If the P3P problem has a side-sharing pair (or a point-sharing pair) of solutions, their remaining solutions, if any, must be a point-sharing pair (or a side-sharing pair).
\end{thm}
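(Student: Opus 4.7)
The natural approach is through the pencil of conics $\{\lambda C_1 + \mu C_2\}$ whose common base points are precisely the (up to) 4 solutions of the P3P problem. Classical projective geometry says this pencil contains (generically) exactly 3 degenerate members, each a product of 2 lines; these 3 line-pairs realize the 3 ways of partitioning the 4 base points into 2 disjoint pairs. My plan is to show that every such degenerate member is the product of one side-sharing line (from Theorem~\ref{thmsidetwo}) and its companion point-sharing line (from Theorem~\ref{thmpointtwo}); the stated theorem will then follow at once by reading off which line contains which pair of solutions.

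The central identity is
\[
C_1 - C_2 \;=\; -2a\,\bigl(\cos\gamma\,u - \cos\beta\,v\bigr)\!\left(\frac{b\cos\angle ACB}{\cos\gamma}u + \frac{c\cos\angle ABC}{\cos\beta}v - a\right),
\]
which I claim holds whenever the P3P problem admits a side-sharing pair over $BC$. To prove it I would substitute $u = (\cos\beta/\cos\gamma)v$ into $C_1$ and $C_2$: direct expansion shows that the two conics restricted to the side-sharing line share the same linear term $-2a^2\cos\beta\,v$ and the same constant term $a^2$, so their difference simplifies to $(A_1 - A_2)v^2$ for explicit expressions $A_1, A_2$ in the parameters. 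Two distinct positive solutions on this line then force $A_1 = A_2$, which makes $\cos\gamma u - \cos\beta v$ a divisor of $C_1 - C_2$. Polynomial division, together with the Laws of Cosines $a^2+b^2-c^2 = 2ab\cos\angle ACB$ and $a^2+c^2-b^2 = 2ac\cos\angle ABC$, identifies the cofactor as $-2a$ times the point-sharing line associated with vertex $A$.

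From this factorization the theorem follows at once. If the P3P has a side-sharing pair over $BC$, any additional solution satisfies $C_1 - C_2 = 0$ and hence lies either on the side-sharing line or on the point-sharing line for $A$; since a line meets a non-degenerate conic in at most 2 points and the side-sharing line is already saturated by the side-sharing pair, any further solutions lie on the point-sharing line and, by Theorem~\ref{thmpointtwo}, form a point-sharing pair over $A$. Conversely, a point-sharing pair over $A$ makes the point-sharing line a chord of the base-point set, hence a linear factor of some degenerate pencil member; the explicit identity above combined with the unique factorization of a reducible conic into lines forces the companion factor to be the side-sharing line for $BC$, so the remaining solutions form a side-sharing pair over $BC$. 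The remaining cases (shared side $AC$ or $AB$, with companion vertex $B$ or $C$) follow from the analogous factorizations of the other 2 degenerate pencil members; these are obtained by repeating the calculation after cyclically permuting the role of the reference control point in the affine coordinates $u = s_2/s_1,\ v = s_3/s_1$.

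The main obstacle is the explicit identification of the cofactor in the key factorization: the $uv$-coefficient match reduces to a nontrivial trigonometric identity equivalent to the condition $A_1 = A_2$, and it is only by invoking the Laws of Cosines at the right step that the cofactor takes the clean form of the specific point-sharing line of Theorem~\ref{thmpointtwo} rather than an amorphous linear expression. A secondary technicality is making the "by symmetry" argument for the other 2 pairings rigorous, since the conics $C_1, C_2$ single out vertex $A$; a careful projective coordinate change (or a brute-force duplication of the computation) is needed to extend the factorization to the remaining 2 degenerate pencil members.
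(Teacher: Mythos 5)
Your proposal is correct and follows essentially the same route as the paper: both hinge on showing that, once two solutions lie on the line $\cos\gamma\,u-\cos\beta\,v=0$ (forcing the quadratic coefficient of the restricted difference to vanish, i.e.\ the paper's condition on $\cos\alpha$), the difference conic $C_2-C_1$ factors into the side-sharing line times the point-sharing line of Theorem~\ref{thmpointtwo}, and then reading off where any remaining intersection points must lie. The pencil-of-conics language and the explicit ``a line meets the conic in at most two points'' step are just a cleaner packaging of the paper's argument, not a different proof.
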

\begin{proof}
  Let us consider the pair of conics (2) in section 2.1.
By subtracting (C1) from (C2),  we have:
\begin{align}\label{equ10}
 &(a^2+b^2-c^2)u^2+2(c^2-b^2)cos\alpha uv-(a^2-b^2+c^2)v^2 \notag \\
 &-2a^2cos\gamma u+2a^2cos\beta v=0
\end{align}
(I). Given a pair of side-sharing solutions,  if the P3P problem still has other solutions,  these remaining solutions must be a pair of point-sharing solutions.\\
According to Theorem \ref{thmsidetwo},  if there is a pair of side-sharing solutions with $BC$ being the shared side,  we have $cos\gamma u-cos\beta v=0$, then
\begin{align}\label{equside1}
 v= u cos \gamma/cos\beta
\end{align}
Substituting (\ref{equside1})  into (\ref{equ10}),  we have
\begin{align*}
\textstyle \frac{2(b^2-c^2)cos\alpha cos\beta cos\gamma-cos^2\gamma(b^2-a^2-c^2)+cos^2 \beta(c^2-a^2-b^2)}{cos^2\beta}u^2=0
\end{align*}
then
\begin{align}\label{equ11}
\textstyle cos\alpha=\frac{cos^2\gamma(b^2-a^2-c^2)-cos^2 \beta(c^2-a^2-b^2)}{2(b^2-c^2)cos\beta cos\gamma}
\end{align}
Substituting (\ref{equ11}) into (\ref{equ10}),  (\ref{equ10}) can be factorized as:
\begin{align}
  (ucos\gamma-vcos\beta)\left(\frac{ cos\angle ACB}{cos\gamma}bu+\frac{cos \angle ABC}{cos\beta}cv-a\right)=0 \label{factor}
\end{align}
So $\frac{ cos \tiny \angle ACB}{cos\gamma}bu+\frac{cos \tiny \angle ABC}{cos\beta}cv-a=0$ also holds,  then by Theorem \ref{thmpointtwo}, and if $u>0,v>0$, there must exist a pair of point-sharing solutions with the shared point on the ray $OA$.\\
(II).Given a pair of point-sharing solutions,  if the P3P problem still has other solutions,  these remaining solutions must be a pair of side-sharing solutions.\\
From Theorem \ref{thmpointtwo},  suppose there is a pair of point-sharing solutions with shared point $A$,  then $\frac{ cos \tiny \angle ACB}{cos\gamma}bu+\frac{cos \tiny \angle ABC}{cos\beta}cv-a=0$,
so substituting $v$ into  (\ref{equ10}),  we have:
\begin{align}\label{equ12}
 \scriptstyle 2(b^2-c^2)cos\alpha cos\beta cos\gamma-cos^2\gamma(b^2-a^2-c^2)+cos^2 \beta(c^2-a^2-b^2)=0
\end{align}
(\ref{equ12}) is the same with (\ref{equ11}),  hence (\ref{equ10}) can be factorized as (\ref{factor})again.

So $ucos\gamma-vcos\beta=0$ additionally holds,  then by Theorem \ref{thmsidetwo}, and if $u>0,v>0$,  there must exist a pair of side-sharing solutions with $BC$ being the shared side.

Combining (I) and (II),  the theorem is proved.
\end{proof}

\section{Conclusion}

In this work,  we investigate the multi-solution phenomenon in the P3P problem,  and provide some new insights into the nature of some interesting phenomena. In particular we show that the necessary and sufficient condition for the P3P problem to have a pair of side-sharing solutions is that the two optical centers of the solutions lie on the 3 vertical planes to the base plane of control points; the necessary and sufficient condition for the P3P problem to have a pair of point-sharing solutions is that the two optical centers of the solutions lie on the 3 so-called skewed danger cylinders; And if the P3P problem has other solutions in addition to a pair of side-sharing (point-sharing) solutions,  these remaining solutions must be a point-sharing (side-sharing) pair,  or,  the side-sharing pair and the point-sharing pair are often companion pairs. These results are helpful to understand the multiple solution phenomenon in the P3P problem.

{\small
\bibliographystyle{ieee}
\bibliography{egbibmy}
}

\end{document}